\documentclass[journal,twoside,web]{ieeecolor}
\usepackage{generic}
\usepackage{cite}
\usepackage{amsmath,amssymb,amsfonts}
\usepackage{algorithmic}
\usepackage{graphicx}
\usepackage{textcomp}

\usepackage{booktabs}
\usepackage{multirow}
\usepackage{url}
\usepackage{bbm}
\usepackage{subfigure}
	\usepackage{graphicx}
	\usepackage{algorithm}
	\usepackage{algorithmic}
	\usepackage{rotating}
	\newtheorem{theorem}{Theorem}

	\newtheorem{definition}{Definition}

\def\BibTeX{{\rm B\kern-.05em{\sc i\kern-.025em b}\kern-.08em
    T\kern-.1667em\lower.7ex\hbox{E}\kern-.125emX}}
\begin{document}
\title{Single-Edge Node Injection Threats to GNN-Based Security Monitoring in Industrial Graph Systems}
\author{Wenjie Liang,  Ranhui Yan, Jia Cai, and You-Gan Wang
	\thanks{This work was supported in part by the National Natural Science Foundation of China under Grant 12271111. Wenjie Liang and Ranhui Yan contributed equally to this work. The corresponding authors are Jia Cai and You-Gan Wang.}
	\thanks{Wenjie Liang  is with School of Digital Technology, Guangdong Finance $\&$ Trade Vocational College, Guangzhou, Guangdong, 510320, China. (e-mail: liangwenjie@gdcmxy.edu.cn). }
	\thanks{Ranhui Yan is with School of Information and Intelligent Engineering, Guangzhou Xinhua University, Guangzhou, Guangdong, 510520, China. (e-mail: yanranhui1373@xhsysu.edu.cn). }
	\thanks{Jia Cai and You-Gan Wang are with School of Statistics and Data Science, Guangdong University  of Finance $\&$ Economics, Guangzhou, Guangdong, 510320, China. (e-mails: jiacai1999@gdufe.edu.cn (J. Cai), wangyg@gdufe.edu.cn(Y. G. Wang).)}}

\maketitle

\begin{abstract} 
Graph neural networks (GNNs) are increasingly adopted in industrial graph-based monitoring systems (e.g., Industrial internet of things (IIoT)  device graphs, power-grid topology models, and manufacturing communication networks) to support anomaly detection, state estimation, and asset classification. In such settings, an adversary that compromises a small number of edge devices may inject counterfeit nodes (e.g., rogue sensors, virtualized endpoints, or spoofed substations) to bias downstream decisions while evading topology- and homophily-based sanitization. This paper formulates deployment-oriented node-injection attacks under constrained resources and proposes the \emph{Single-Edge Graph Injection Attack} (SEGIA), in which each injected node attaches to the operational graph through a single edge. SEGIA integrates a pruned SGC surrogate, multi-hop neighborhood sampling, and reverse graph convolution--based feature synthesis with a similarity-regularized objective to preserve local homophily and survive edge pruning. Theoretical analysis and extensive evaluations across datasets and defenses show at least $25\%$ higher attack success than representative baselines under substantially smaller edge budgets. These results indicate a system-level risk in industrial GNN deployments and motivate lightweight admission validation and neighborhood-consistency monitoring.
\end{abstract} 

\begin{IEEEkeywords}
	Graph Injection Attack, Pruned Simple Graph Convolution (PrSGC), Reverse Graph Convolution, Similarity Regularization.
\end{IEEEkeywords}

\section{Introduction}
\label{sec:introduction}
\IEEEPARstart{I}{ndustrial} cyber--physical systems (CPS) increasingly rely on graph-based monitoring to support anomaly detection, asset identification, fault localization, and risk triage under stringent latency and reliability constraints. In industrial internet of things (IIoT), smart-grid, and manufacturing environments, operational graphs couple devices, controllers, identities, and cyber--physical dependencies. Consequently, the security and trustworthiness of graph-driven decision pipelines has become a system-level requirement.

Graph neural networks (GNNs) enable industrial informatics by fusing topology and telemetry and propagating information across multi-hop dependencies. This capability aligns with industrial coupling (e.g., cascading grid effects, lateral movement paths in operational technology (OT) networks, and correlated process faults) but also enlarges the attack surface: localized manipulations can influence predictions beyond the immediate neighborhood and bias operational actions. A deployment-relevant threat is the node injection attack, in which an adversary introduces malicious or counterfeit entities into the monitored graph without directly rewriting protected connectivity. In industrial deployments, injection may occur through onboarding a rogue sensor or gateway, instantiating a spoofed protocol endpoint, or creating virtualized ``shadow'' assets. Once admitted into message passing, injected nodes can distort learned representations and induce misclassification (e.g., masking compromised equipment, suppressing alarms, or distorting prioritization), thereby translating model-level errors into operational risk.

Despite growing work on GNN robustness, many adversarial formulations remain misaligned with industrial constraints. Common assumptions include large structural budgets (e.g., multi-edge injections or extensive topology edits) and strong attacker knowledge (e.g., global topology or full features). In contrast, industrial adversaries typically face admission controls, partial observability, and evolving graphs. Moreover, practical monitoring pipelines may apply homophily-oriented edge pruning and neighborhood-consistency checks, increasing the likelihood that conspicuous injections are detected. Finally, prior results are often reported as overall accuracy degradation without explicitly connecting induced errors to system-level risk in CPS and OT operations.

To address these gaps, this paper studies node injection under resource constraints and stealth requirements motivated by industrial deployments. The proposed \emph{Single-Edge Graph Injection Attack} (SEGIA) enforces a deployment-oriented constraint: each injected node connects to the operational graph through a single edge, limiting exposure to topology validation and edge-budget auditing. SEGIA integrates (i) a pruning-aware surrogate to anticipate edge-pruning defenses, (ii) multi-hop neighborhood sampling for scalable optimization under partial knowledge, and (iii) reverse graph convolution--based feature synthesis with similarity regularization to preserve local homophily and reduce detectability. The resulting impact is analyzed from a system-level perspective, yielding implications for lightweight trustworthiness controls.

The main contributions are summarized as follows:
\begin{itemize}
	\item \textbf{Deployment-oriented threat model:} A resource-constrained node-injection threat is formalized for industrial graph-based monitoring in IIoT/CPS/smart-grid environments, capturing admission and connectivity constraints and their operational implications.
	\item \textbf{Single-edge, stealth-aware attack:} \textbf{SEGIA} is proposed as a single-edge injection framework that combines local sampling, reverse feature synthesis, and a pruning-aware surrogate objective to induce high-impact misclassification under strict edge budgets.
	\item \textbf{Evidence under constrained budgets:} Across datasets and defenses, SEGIA achieves at least $25\%$ higher attack success than representative baselines under substantially smaller edge budgets, indicating residual risk under homophily-oriented sanitization in the evaluated settings.
\end{itemize}

The remainder of this paper is organized as follows. Section~\ref{rela} reviews related work. Section~\ref{prel} introduces preliminaries and threat models. Section~\ref{method} presents SEGIA and analysis. Section~\ref{expe} reports experimental results, followed by discussion in Section~\ref{conl}. Proofs are provided in the Appendix.

\section{Related Work} \label{rela} 
This section reviews adversarial attacks on graph neural networks (GNNs), with an emphasis on \emph{graph injection attacks} (GIAs) that align with admission-constrained industrial graph monitoring in IIoT/CPS and smart-grid settings. Prior work has shown that GNN inference can be degraded by adversarial perturbations to graph structure and attributes~\cite{bojchevski2019adversarial,jia2020certified,ma2021graph,AdversarialAttackSurvey}. 

\textbf{Graph modification attacks (GMAs).} GMAs directly edit existing edges and/or node features~\cite{jia2020certified,AdversarialAttackSurvey,Nettack,zugner2018adversarial}. While useful for robustness evaluation, these assumptions may be difficult to satisfy in operational graphs where authenticated connectivity and registered attributes are protected, limiting an attacker's ability to arbitrarily rewrite topology or features~\cite{jia2020certified,Nettack,HAO}. 

\textbf{Graph injection attacks (GIAs).} GIAs add malicious nodes and incident edges without modifying original nodes or links, which better matches permission constraints and device admission processes. Representative methods include AFGSM~\cite{wang2020scalable} (architecture-specific approximations), NIPA~\cite{NIPA} (sequential label/edge generation with higher cost), and TDGIA~\cite{TDGIA} (heuristic injection with feature optimization). Subsequent studies have explored restrictive and stealth-oriented settings: G-NIA~\cite{G-NIA} shows that even single-node injection can substantially degrade performance, while HAO~\cite{HAO} and CANA~\cite{CANA} incorporate homophily-aware constraints and camouflage objectives to improve survivability under detection and pruning. Other variants extend GIAs to class-specific poisoning (NICKI)~\cite{SHARMA2023236}, gradient-free single-node injection (G$^2$-SNIA)~\cite{Chen2023G2SNIA}, dynamic graphs (SFIA)~\cite{JIANG2024100185}, multi-view constructions (MV-RGCN)~\cite{MVRGCN}, generation-based camouflage (IMGIA)~\cite{yangimgia}, text-attributed graphs~\cite{Leitag}, global injection via label propagation (LPGIA)~\cite{ZhuLPGIA2024}, and isolated-subgraph injection (LiSA)~\cite{Zhang2025LiSA}. However, GIAs that are simultaneously \emph{low-footprint} (small edge budgets), \emph{efficient} under partial observability, and \emph{robust to homophily-oriented sanitization} remain underexplored for deployment-oriented industrial graph analytics, which motivates SEGIA.

\section{Preliminaries}
\label{prel}

\subsection{Notation}
Table~\ref{notation} summarizes the main symbols used throughout the paper. Additional notation is defined at first use.

\begin{table*}[!htbp]
	\caption{Summary of notation used throughout the paper.}
	\centering
	\begin{tabular}{ll}
		\toprule
		Symbol & Description \\
		\midrule
		$V, E$ & Node set and edge set of the graph \\
		$N, D$ & Numbers of nodes and feature dimension, respectively \\
		$G=(A,X)$ & Attributed graph with adjacency matrix $A$ and feature matrix $X$ \\
		$G'=(A',X')$ & Attacked graph with adjacency matrix $A'$ and feature matrix $X'$ \\
		$X_I\in\mathbb{R}^{N_I\times D}$ & Feature matrix of injected nodes \\
		$A_I\in\{0,1\}^{N_I\times N}$ & Adjacency between injected and original nodes \\
		$\tilde{A}=A+I$ & Adjacency with self-loops ($I$ is the identity matrix) \\
		$\hat{A}=\tilde{D}^{-1/2}\tilde{A}\tilde{D}^{-1/2}$ & Normalized adjacency ($\tilde{D}$ is the degree matrix of $\tilde{A}$) \\
		$\mathcal{N}(u)$ & Neighbor set of node $u$ \\
		$d_u$ & Degree of node $u$ \\
		$e_{Y_u}$ & One-hot vector for class $Y_u$ \\
		$C$ & Number of classes \\
		$V_L, V_t, V_c$ & Labeled nodes, target nodes, and victim nodes, respectively \\
		$N_I$ & Number of injected nodes \\
		$f_\theta$ & GNN classifier with parameters $\theta$ \\
		$W$ & Trainable weight matrix in the surrogate \\
		$Z_{\mathrm{SGC}}, Z_{\mathrm{PrSGC}}$ & Outputs of SGC and PrSGC surrogate, respectively \\
		${\cal L}_{\mathrm{train}}$ & Training loss (cross-entropy) on $V_L$ \\
		$\Delta$ & Perturbation budget (defined in Sec.~\ref{prel}; SEGIA enforces single-edge injection) \\
		$P\in\{0,1\}^{N\times N}$ & Binary pruning mask in PrSGC \\
		$\varepsilon$ & Similarity threshold used to construct $P$ \\
		$\alpha$ & Weight of similarity regularization \\
		$Pr$ & Perturbation rate \\
		$V_s^k$ & Sampled node set at layer $k$ ($k=1,\ldots,K$; $K$ is sampling depth) \\
		$s^k=|V_s^k|$ & Number of sampled nodes at layer $k$ \\
		$\mathbf{S}(\cdot)$ & One-hop neighbor sampling operator \\
		$M^k\in\mathbb{R}^{s^{k-1}\times s^k}$ & Connectivity matrix between $V_s^{k-1}$ and $V_s^{k}$ \\
		$T$ & Maximum number of iterations \\
		\midrule
		$H^{(2)}$ & Two-layer hidden representation under the surrogate model \\
		\midrule
		$h_u$ & Node-centric homophily score of node $u$ \\
		$r_u$ & Degree-normalized average of neighbors' features for node $u$ \\
		$H_G=\{h_v(G):v\in V\}$ & Multiset of homophily scores on $G$ \\
		$g_\psi$ & Homophily-based defender (edge-pruning function) \\
		$\mathbbm{1}_{\mathrm{con}}(u,v)$ & Indicator of connectivity between $u$ and $v$ \\
		$Z'$ & Output logits after defense/pruning \\
		\bottomrule
	\end{tabular}
	\label{notation}
\end{table*}

\subsection{Surrogate GNN Model}
\label{subsec21}
Let $G=(A,X)$ be an attributed graph with $A\in\{0,1\}^{N\times N}$ and $X\in\mathbb{R}^{N\times D}$. We use Simple Graph Convolution (SGC)~\cite{SGC} as an efficient surrogate:
\begin{equation}
Z_{\mathrm{SGC}}=\mathrm{softmax}\!\left(\hat{A}^2 X W\right),
\label{eq:SGC}
\end{equation}
where $\hat{A}=\tilde{D}^{-1/2}\tilde{A}\tilde{D}^{-1/2}$, $\tilde{A}=A+I$, and $W$ is trainable. Given labeled nodes $V_L\subseteq V$ with labels $Y_u\in\{0,\ldots,C-1\}$, the parameters are trained by minimizing cross-entropy on $V_L$:
\begin{equation*}
{\cal L}_{\mathrm{train}}=\sum_{u\in V_L}\ell\!\left(f_\theta(A,X)_u,\,Y_u\right).
\label{eq:ltrain}
\end{equation*}

\subsection{Graph Adversarial Attacks}
Graph adversarial attacks construct an attacked graph $G'=(A',X')$ under a perturbation budget $\|G'-G\|\le\Delta$ to increase error on a victim set $V_c\subseteq V$. A common bilevel formulation is
\begin{align*}
\min\quad & {\cal L}_{\mathrm{atk}}\!\left(f_{\theta^*}(G')\right) \notag\\
\text{s.t.}\quad & \theta^*=\arg\min_\theta {\cal L}_{\mathrm{train}}\!\left(f_\theta(G_{\mathrm{train}})\right),\quad \|G'-G\|\le\Delta
\label{eq:bilevel}
\end{align*}
with ${\cal L}_{\mathrm{atk}}=-{\cal L}_{\mathrm{train}}$.

\textbf{Graph modification attacks (GMAs).}
GMAs perturb existing edges and/or features. While widely used for robustness evaluation, they may be impractical in operational graphs with protected topology and registered attributes.

\textbf{Graph injection attacks (GIAs).}
GIAs inject $N_I$ nodes without modifying original nodes:
\begin{equation}
X'=\begin{bmatrix}X\\X_I\end{bmatrix},\qquad
A'=\begin{bmatrix}A & A_I^\top\\A_I & \mathbf{0}\end{bmatrix},
\label{eq:gia_def}
\end{equation}
where $X_I\in\mathbb{R}^{N_I\times D}$ are injected features and $A_I\in\{0,1\}^{N_I\times N}$ encodes injected-to-original edges. Following~\cite{HAO}, the GIA objective is
\begin{equation}
\begin{aligned}
\min\quad & {\cal L}_{\mathrm{GIA}}\!\left(f_{\theta^*}(G')\right),\\
\text{s.t.}\quad & \theta^*=\arg\min_\theta {\cal L}_{\mathrm{train}}\!\left(f_\theta(G_{\mathrm{train}})\right),\quad \|G'-G\|\le\Delta.
\end{aligned}
\label{eq:original-attack-loss}
\end{equation}
Here, ${\cal L}_{\mathrm{GIA}}=-{\cal L}_{\mathrm{train}}$  is defined to increase prediction error on the targeted/victim nodes under the injection budget.

These definitions enable a deployment-oriented formulation of single-edge node injection under homophily-aware sanitization. Next, we present SEGIA, including local neighborhood sampling, reverse feature synthesis, and a pruning-aware surrogate objective.

\section{Methodology}\label{method} 

This section presents the proposed SEGIA with an emphasis on \textbf{industrial graph-based systems} and \textbf{deployment feasibility}. In industrial informatics, graphs are commonly constructed from OT and IIoT telemetry to support state awareness and security analytics. Nodes may represent \emph{devices} (PLCs, RTUs, sensors, gateways), \emph{users/identities} (operator accounts, service principals), or \emph{grid assets} (substations, feeders, relays), while edges encode \emph{communication links}, \emph{control dependencies}, \emph{asset-to-asset couplings}, or \emph{trust/authorization relations}. SEGIA models an adversary that introduces a small number of \textbf{counterfeit entities} (fake devices/accounts/virtual assets) and establishes \textbf{minimal connectivity} to influence a GNN-based classifier used in industrial monitoring (e.g., anomaly triage, asset classification, or event categorization), while reducing detectability under topology- and homophily-based sanitization. 
 
\subsection{Industrial threat model and feasibility constraints} 
We consider both \emph{evasion} and \emph{targeted} settings. Given a trained victim GNN, the attacker injects malicious nodes to force misclassification on a set of target nodes $V_t$ (e.g., causing a compromised device to be classified as benign, suppressing a fault indicator, or shifting risk labels used for operator actions). SEGIA is designed under constraints typical in industrial deployments: 
\begin{itemize}
	\item \textbf{Entity realism:} 
	Each injected node corresponds to a plausible industrial entity (e.g., a newly onboarded sensor, a rogue gateway endpoint, a spoofed user/service identity, or a virtualized grid asset record). 
	\item \textbf{Feature fabrication realism:} 
	Injected features must resemble feasible telemetry/metadata (protocol statistics, configuration fingerprints, operational measurements, or embedding-based attributes) and avoid abrupt deviations that trigger consistency checks. \item \textbf{Edge construction constraints:} OT segmentation, admission control, and whitelist policies limit new connectivity. SEGIA enforces a \textbf{single-edge budget} per injected node, reflecting a low-footprint link such as ``talking'' to one gateway, associating with one user group, or attaching to one asset neighborhood. 
	\item \textbf{Limited knowledge and scalability:} 
	The attacker may only observe a local neighborhood around targets or obtain limited query feedback; thus SEGIA uses neighborhood sampling and an efficient surrogate to keep optimization practical. 
\end{itemize} 
 
\begin{figure*}[!htbp] 
	\centering 
	\includegraphics[width=0.7\linewidth]{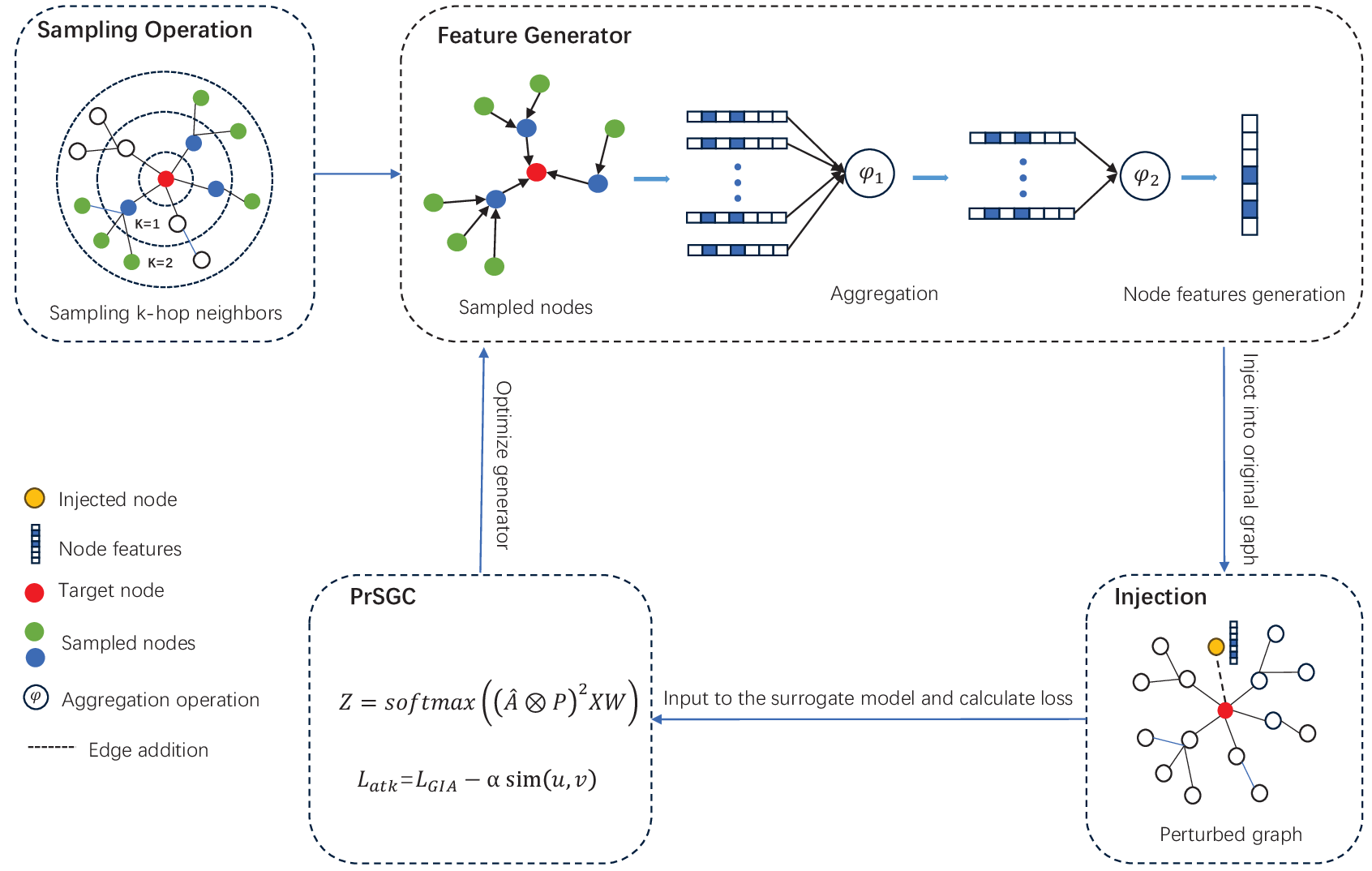} 
	\caption{Workflow of the proposed SEGIA, interpreted as counterfeit-entity injection into industrial graph-based monitoring.} 
	\label{fig:overall-framework} 
\end{figure*} 

\begin{algorithm}[!htbp]
	\caption{SEGIA: Single-Edge Graph Injection Attack (industrial interpretation)}
	\label{alg:SEGIA}
	\begin{algorithmic}[1]
		\REQUIRE Graph $G=(A,X)$; target set $V_t$; sampling depth $K$; iterations $T$; regularization weight $\alpha$; pruning threshold $\varepsilon$ (PrSGC).
		\ENSURE Attacked graph $G'=(A',X')$ with injected nodes and single-edge attachments.
		
		\STATE Sample a $K$-hop neighborhood around targets to obtain $\{V_s^{k}\}_{k=0}^{K}$ using Eq.~\eqref{eq:sample-node-set_industrial}.
		\STATE Initialize feature generator parameters $\{W^{k}\}_{k=1}^{K}$ (and optionally initialize anchor mapping $j(i)$).
		\FOR{$t = 1$ to $T$}
		\FOR{$k = K$ down to $1$}
		\STATE Construct $M^k$ by Eq.~\eqref{eq:M_industrial} and row-normalize $\widetilde M^k$ by Eq.~\eqref{eq:mean-M_industrial}.
		\STATE Update reverse-aggregated features using Eq.~\eqref{eq:feat-agg_industrial}.
		\ENDFOR
		\STATE Set injected feature matrix $X_I \leftarrow X^{0}$; optionally project $X_I$ to valid ranges.
		\STATE Choose one anchor $v_{j(i)} \in V_s^{1}\cup V_t$ for each injected node $u_i$ and construct $A_I$ with exactly one nonzero entry per row.
		\STATE Form the attacked graph by Eq.~\eqref{eq:gia_def}: $X'=[X;X_I]$, $A'=\begin{bmatrix}A & A_I^\top\\A_I & \mathbf{0}\end{bmatrix}$.
		\STATE Evaluate ${\cal L}_{\mathrm{atk}}(G')$ in Eq.~\eqref{eq:atk-loss_industrial} using the PrSGC surrogate in Eq.~\eqref{eq:prsgc_industrial}.
		\STATE Update generator parameters $\{W^{k}\}$ via (projected) gradient descent.
		\ENDFOR
		\RETURN $G'$.
	\end{algorithmic}
\end{algorithm}
Accordingly, SEGIA consists of three components: 
(i) \textbf{local neighborhood sampling} around targets,
(ii) \textbf{reverse graph convolution feature synthesis}, and
(iii) \textbf{optimization with a pruned surrogate and similarity-regularized objective}. The framework is shown in Fig.~\ref{fig:overall-framework} and summarized in Algorithm~\ref{alg:SEGIA}.
\subsection{Local neighborhood sampling} 
\label{subsec31} 
Industrial graphs can be large (enterprise-scale IIoT, utility-scale grid topology, and plant-wide communication graphs). To ensure engineering feasibility, SEGIA restricts optimization to a \emph{local} multi-hop neighborhood around the target set $V_t$. This reflects realistic attacker visibility (nearby devices, adjacent network segments, or reachable asset dependencies) and reduces computation. Let $K$ denote the sampling depth and let $\mathbf{S}(\cdot)$ be a one-hop neighbor sampling operator. Initialize $V_s^{0}=V_t$ and iteratively sample neighbors: 
\begin{equation} 
V_s^{k}=V_s^{k-1}\cup \mathbf{S}\!\left(V_s^{k-1}\right),\qquad k=1,2,\ldots,K. \label{eq:sample-node-set_industrial} 
\end{equation} Thus, $V_s^{k}$ contains nodes within $k$ hops of the targets. In industrial terms, this captures local dependencies (shared gateways, peer controllers, adjacent substations/feeders, or users with shared roles) that typically dominate message passing in GNN inference.

\begin{figure}[!htbp] 
	\centering 
	\includegraphics[width=0.4\linewidth]{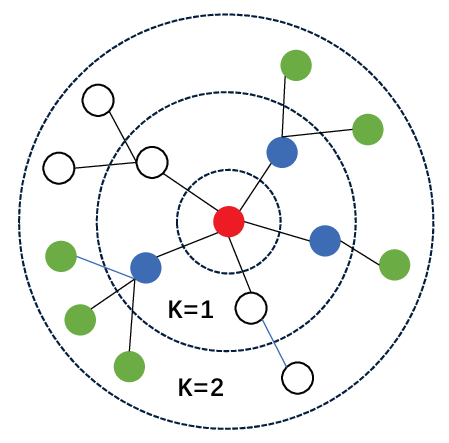} \caption{Two-layer neighborhood sampling around a target node (red) to approximate local industrial dependencies.}
	\label{fig:sampling} 
\end{figure} 

\subsection{Reverse graph convolution for realistic feature fabrication} 
\label{subsec32} 

SEGIA fabricates features for injected nodes to resemble \emph{plausible} industrial entities. In practice, node features often encode telemetry-derived vectors (e.g., traffic statistics, process measurements, device configuration signatures, event counts) or learned embeddings produced by upstream pipelines. Industrial monitoring systems commonly exhibit \emph{local similarity} (devices in the same cell/feeder share operating regimes; users in the same role share access patterns), which is exploited by both GNNs and homophily-aware defenses. SEGIA therefore synthesizes injected features by \emph{encoding local context} from the sampled neighborhood. Let $V_s^{k-1}$ and $V_s^{k}$ be sampled sets at consecutive layers and let $s^{k}=|V_s^{k}|$. Define the inter-layer connectivity matrix $M^k\in\mathbb{R}^{s^{k-1}\times s^{k}}$ as 
\begin{equation} 
M^k_{u,v} = 
\begin{cases} 1, & \text{if } A_{uv}=1,\\ 0, & \text{otherwise}, 
\end{cases} \qquad u \in V_s^{k-1},\; v \in V_s^{k}, \label{eq:M_industrial}
\end{equation} 
where $A$ is the adjacency matrix of $G$. We row-normalize 
\begin{equation} 
\widetilde M^k = \mathbf{Rownormal}(M^k), \label{eq:mean-M_industrial} 
\end{equation} 
which reduces sensitivity to degree heterogeneity common in industrial graphs (e.g., gateways/hubs vs.\ leaf sensors). Let $X^k\in\mathbb{R}^{s^{k}\times D}$ be the feature matrix of nodes in $V_s^{k}$ (with $X^K = X_{V_s^{K}}$). We then propagate information \emph{from outer layers to inner layers} via a reverse graph convolution: 
\begin{equation} X^{k-1} = \mathrm{ReLU}\!\left(\widetilde{M}^{k} X^{k} W^{k}\right), \qquad k = K,K-1,\dots,1, \label{eq:feat-agg_industrial}
\end{equation} 
where $W^{k}\in\mathbb{R}^{D\times D}$ are learnable parameters. After $K$ steps, $X^{0}\in\mathbb{R}^{s^{0}\times D}$ provides context-consistent feature vectors for injected nodes. 

\textbf{Engineering realism.} When features correspond to bounded physical/telemetry ranges (e.g., normalized measurements or standardized configuration fields), SEGIA can enforce realism by projecting $X^0$ into valid ranges (dataset- or system-specific bounds) during optimization. 

\subsection{Single-edge construction under industrial connectivity constraints} \label{subsec32b} Unlike many GIAs that rely on multiple injected edges, SEGIA enforces a strict \textbf{single-edge budget}: each injected node establishes exactly one link to the original graph. Concretely, for each injected node $u_i$, SEGIA attaches it to one selected anchor node $v_{j(i)}$ in the neighborhood of a target (often a target itself or a near neighbor), yielding an injection adjacency $A_I$ with exactly one nonzero entry per row. This low-footprint attachment reduces exposure to degree-based anomaly detection and aligns with OT segmentation and admission controls. 

\subsection{Optimization with a defense-aware surrogate} \label{subsec33} 

\subsubsection{Surrogate model (PrSGC) under edge pruning} 
\label{subsubsec331} 
In many deployments, the adversary does not know the victim model parameters and can only observe limited outcomes, motivating a surrogate. We adopt SGC for efficiency and introduce a pruned variant (PrSGC) to anticipate \emph{homophily-aware edge pruning}.

\paragraph{\bf Simple Graph Convolution (SGC)} 
\begin{equation*}
Z_{\mathrm{SGC}} = \operatorname{softmax}\!\left(\hat A^2 X W\right), \label{eq:sgc_industrial} 
\end{equation*} 
where $\hat{A}$ is the normalized adjacency and $W$ is trainable. 

\paragraph{\bf Pruned Simple Graph Convolution (PrSGC)} 
Define a binary pruning mask $P\in\{0,1\}^{N\times N}$: 
\begin{equation*} P_{uv} = \begin{cases} 1, & \text{if } \mathrm{sim}(x_u,x_v) \ge \varepsilon,\\ 0, & \text{otherwise}, \end{cases} \qquad u,v \in V, \label{eq:marker-P_industrial} 
\end{equation*} 
where $\mathrm{sim}(\cdot,\cdot)$ is cosine similarity and $\varepsilon$ is a pruning threshold. The PrSGC surrogate is 
\begin{equation} 
Z_{\mathrm{PrSGC}} = \operatorname{softmax}\!\left((\hat A \odot P)^2 X W\right). \label{eq:prsgc_industrial} 
\end{equation} 
Following common practice in gradient-based optimization (e.g.,~\cite{Nettack}), we use the linearized logits (i.e., ignoring the softmax normalization) when crafting attacks.

\subsubsection{Stealth-oriented objective via similarity regularization} 
\label{subsubsec332} Industrial monitoring pipelines may penalize entities whose attributes are inconsistent with their local context. To improve stealthiness, SEGIA encourages injected nodes to remain similar to their anchor neighborhoods, reducing detectability under homophily-based sanitization. We recall node-centric homophily~\cite{HAO}. 
\begin{definition}[Node-centric homophily] 
	Let $x_u$ denote the feature vector of node $u$. The homophily of node $u$ is defined as
	\[ h_u = \mathrm{sim}(r_u, x_u),\qquad r_u = \sum_{j\in \mathcal N(u)} \frac{1}{\sqrt{d_j}\sqrt{d_u}}\, x_j, \] 
	where $d_u$ is the degree of node $u$ and $\mathcal N(u)$ is its neighbor set. 
\end{definition} 

\subsection{Attack objective and constraints} 
\label{subsec:attack-objective} 
Let $f_{\theta}$ be the victim GNN with parameters $\theta^*$ trained on the clean graph $G=(A,X)$ by minimizing ${\cal L}_{\mathrm{train}}$ on labeled nodes. Given targets $V_t$, define the target-set loss 
\begin{equation*} {\cal L}_{\mathrm{tgt}}(G',V_t)=\sum_{u\in V_t}\ell\!\left(f_{\theta^*}(G')_u,\,Y_u\right), \label{eq:Ltgt} 
\end{equation*} 
and define the base injection loss
\begin{equation} {\cal L}_{\mathrm{GIA}}(G') := -\,{\cal L}_{\mathrm{tgt}}(G',V_t), \label{eq:gia-loss_industrial} 
\end{equation} 
such that minimizing ${\cal L}_{\mathrm{GIA}}$ increases the classification error on targets. To improve stealthiness, we add similarity regularization between each injected node $u_i$ and its anchor $v_{j(i)}$: 
\begin{equation} 
{\cal L}_{\mathrm{atk}}(G') = {\cal L}_{\mathrm{GIA}}(G') - \alpha\sum_{i=1}^{N_I}\mathrm{sim}(x_{u_i},x_{v_{j(i)}}), \qquad \alpha>0. \label{eq:atk-loss_industrial}
\end{equation} 
Formally, the attacker solves 
\begin{align} 
\min_{G'}\quad & {\cal L}_{\mathrm{atk}}(G') \notag\\ \text{s.t.}\quad 
& \theta^* = \arg\min_\theta {\cal L}_{\mathrm{train}}\!\left(f_\theta(G_{\mathrm{train}})\right), \notag\\ 
& \|G' - G\| \le \Delta, 
\label{eq:atk-opt_industrial} 
\end{align} 
where $\Delta$ enforces \textbf{one added edge per injected node} (single-edge budget) and may additionally include feature-range constraints to preserve telemetry realism. In SEGIA, the surrogate (PrSGC) approximates the defended decision boundary, while the single-edge constraint is enforced by construction of $A_I$.

\subsection{Theoretical analysis (security interpretation)} 
\label{subsec:theory} 
Homophily-aware defenses remove edges that connect dissimilar entities (e.g., inconsistent telemetry versus neighborhood baselines). Let $g_\psi$ denote a defender that prunes edges $(u,v)$ when $\mathrm{sim}(x_u,x_v)<\varepsilon$, producing a defended graph $g_\psi(G')$. Let $h_v(G)$ be node-centric homophily on $G$, and define the multiset \[ H_G := \{ h_v(G) : v \in V \}. \] Let $\operatorname{dis}(H_G,H_{G'})$ denote a distance between homophily-score distributions (e.g., total variation or Wasserstein). Theorem~\ref{thm1} formalizes that SEGIA reduces homophily disruption compared with conventional GIAs and yields stronger post-pruning attack effectiveness under the stated assumptions. 

\begin{theorem} \label{thm1} 
	Let $G$ be an undirected connected graph without isolated nodes. Assume each class $c\in\{1,\dots,C\}$ has at least one labeled node. Let $f_{\theta}$ be a linearized GNN surrogate (e.g., PrSGC) trained on $G$, and let ${\cal L}_{\mathrm{GIA}}$ denote the base injection loss. Let $G^{\mathrm{GIA}}$ and $G^{\mathrm{SEGIA}}$ be attacked graphs produced by a conventional GIA and by SEGIA, respectively. For the similarity-regularized objective in~\eqref{eq:atk-loss_industrial}, we have 
	\begin{align} 
	\operatorname{dis}\!\big(H_G, H_{G^{\mathrm{SEGIA}}}\big) &\le \operatorname{dis}\!\big(H_G, H_{G^{\mathrm{GIA}}}\big), \label{eq:homophilyineq}\\ {\cal L}_{\mathrm{atk}}\!\big(g_\psi(G^{\mathrm{SEGIA}})\big) &\le {\cal L}_{\mathrm{atk}}\!\big(g_\psi(G^{\mathrm{GIA}})\big). 
	\label{eq:lossineq} 
	\end{align} 
\end{theorem}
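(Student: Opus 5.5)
The plan is to make the comparison precise by fixing the setting in which the two attacked graphs are compared. We take a matched-budget comparison: $G^{\mathrm{GIA}}$ and $G^{\mathrm{SEGIA}}$ inject the same number $N_I$ of nodes. The conventional GIA may attach each injected node to $b_i\ge 1$ original nodes, with features optimized for ${\cal L}_{\mathrm{GIA}}$ alone. SEGIA attaches each injected node to exactly one anchor $v_{j(i)}$, with features optimized for \eqref{eq:atk-loss_industrial}. Homophily scores of original nodes change only at the endpoints of injected edges, so both distances in \eqref{eq:homophilyineq} reduce to sums of local perturbations $|h_v(G')-h_v(G)|$ over affected $v$, plus the mass contributed by the injected nodes themselves. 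We will work with total variation or $W_1$. In either case $\operatorname{dis}(H_G,H_{G'})$ is bounded above, and for small perturbations controlled, by $\frac{1}{|V|}\sum_v |h_v(G')-h_v(G)|$ plus a term in the injected nodes' homophily deficits $1-h_{u_i}$.

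For \eqref{eq:homophilyineq}, we first compute the perturbation at an anchor $v$ receiving one injected neighbor $u$. The new aggregate is $r_v'=\sqrt{d_v/(d_v+1)}\,r_v+x_u/\sqrt{(d_v+1)\cdot 1}$, up to the renormalization of the old neighbors' weights. A first-order expansion of cosine similarity then gives
\[
|h_v(G')-h_v(G)|\le c_v\,\big(1-\mathrm{sim}(x_u,x_v)\big)+O(1/d_v),
\]
where $c_v$ depends only on $\|r_v\|$, $\|x_u\|$ and $d_v$. The injected node's own score is $h_u=\mathrm{sim}(x_v,x_u)$. Summing over anchors yields a bound that is monotone in the number of injected edges and monotone decreasing in the similarities $\mathrm{sim}(x_{u_i},x_{v_{j(i)}})$. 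SEGIA minimizes the edge count, since $b_i=1\le b_i$ for GIA. Its optimizer also maximizes $\sum_i\mathrm{sim}(x_{u_i},x_{v_{j(i)}})$ at weight $\alpha$, while the unregularized GIA has no such term. Together these give the inequality term by term. To make the second point rigorous, compare the SEGIA optimum with the GIA features restricted to one edge: optimality of SEGIA under \eqref{eq:atk-loss_industrial} forces its similarity sum to be at least that of any competitor with no larger ${\cal L}_{\mathrm{GIA}}$ value.

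For \eqref{eq:lossineq}, we analyze $g_\psi$. Any injected edge with $\mathrm{sim}<\varepsilon$ is removed, which isolates a SEGIA node entirely, and the surviving graph then contributes nothing to ${\cal L}_{\mathrm{GIA}}$. We would first argue that, for $\alpha$ large enough relative to the Lipschitz constant of the linearized surrogate in $x_u$ (available since $f_\theta$ is linear in $X$ and $\hat A\odot P$ is bounded), the SEGIA optimum satisfies $\mathrm{sim}\ge\varepsilon$ on every injected edge. Hence $g_\psi(G^{\mathrm{SEGIA}})=G^{\mathrm{SEGIA}}$ restricted to the PrSGC view. PrSGC is trained with the same mask $P$, so it already accounts for pruning. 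The GIA graph, by contrast, loses the low-similarity edges on which its attack relied. The remaining step compares objective values: ${\cal L}_{\mathrm{atk}}(g_\psi(G^{\mathrm{SEGIA}}))$ is the optimum of a problem whose feasible set contains the pruned-GIA configuration projected to one surviving edge per node. Linearity of the surrogate makes each surviving edge's contribution additive, so this projection does not decrease the loss increase.

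The main obstacle is the last point. A GIA node that keeps several high-similarity edges might still outperform a single-edge node, so the inequality cannot hold unconditionally. We would handle this by assuming that the anchor set is chosen among target neighbors, so that single-edge influence on a target through its best anchor dominates diluted multi-edge influence under the $1/\sqrt{d}$ normalization. We would state this explicitly as the regime of the theorem.
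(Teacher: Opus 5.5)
Your route differs from the paper's. The appendix works in the binary encoding and writes a sign-PGD step in which the similarity gradient offsets the attack gradient through $(\hat A\odot P)^2$. It then gets \eqref{eq:homophilyineq} by importing from HAO the pointwise inequality $h_v^{\mathrm{GIA}}\le h_v^{\mathrm{SEGIA}}$ and aggregating, and gets \eqref{eq:lossineq} from the informal remark that similarity-regularized edges survive $g_\psi$. Your quantitative replacement does not deliver \eqref{eq:homophilyineq}, for three reasons.

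(i) You only show that an \emph{upper bound} on $\operatorname{dis}$ is smaller for SEGIA. Ordering the distances themselves needs a \emph{lower} bound on $\operatorname{dis}(H_G,H_{G^{\mathrm{GIA}}})$. You never produce one, and none holds for an arbitrary GIA.

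(ii) The bound itself has the wrong structure.
\begin{itemize}
\item Under total variation, the distance between empirical score distributions charges roughly $1/|V|$ for every moved score regardless of its size. So it is not controlled by $\frac{1}{|V|}\sum_v|h_v(G')-h_v(G)|$.
\item Raising $d_v$ changes the weight $1/\sqrt{d_v d_j}$ inside $r_j$. Every $j\in\mathcal N(v)$ moves, not just the endpoints.
\item Most importantly, write $R=\sqrt{d_v}\,r_v$ and let $\hat x_v,\hat R$ denote unit vectors. The first-order change is $h_v(G')-h_v(G)\approx\langle x_u,\hat x_v-h_v\hat R\rangle/\|R\|$. This vanishes when $x_u\perp x_v$ and $x_u\perp r_v$ (similarity $0$), but equals $\|x_u\|(1-h_v^2)/\|R\|$ when $x_u$ is a positive multiple of $x_v$ (similarity $1$).
\end{itemize}
So the anchor's deviation is not decreasing in $\mathrm{sim}(x_u,x_v)$. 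Your $O(1/d_v)$ remainder hides $\|x_u\|$ and is the entire effect at similarity $1$. The same example shows that a GIA with small features orthogonal to $x_v$ and $r_v$ perturbs $h_v$ only at second order, which is why no lower bound is available in (i). Similarly, the cost of a new atom $h_{u_i}$ depends on its distance from the bulk of $H_G$, not on $1-h_{u_i}$.

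(iii) Your optimality step shows SEGIA's similarity sum dominates that of a competitor $c$ only when ${\cal L}_{\mathrm{GIA}}(c)\le{\cal L}_{\mathrm{GIA}}(G^{\mathrm{SEGIA}})$. The one-edge restriction of a GIA has lost edges and typically has the larger loss, so nothing follows.

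For \eqref{eq:lossineq}, the chain fails at the comparison step. Optimality gives ${\cal L}_{\mathrm{atk}}(g_\psi(G^{\mathrm{SEGIA}}))\le{\cal L}_{\mathrm{atk}}(\Pi)$, where $\Pi$ is the one-edge projection of the pruned GIA. You then need ${\cal L}_{\mathrm{atk}}(\Pi)\le{\cal L}_{\mathrm{atk}}(g_\psi(G^{\mathrm{GIA}}))$. Removing surviving edges removes their paths into the targets, so this fails whenever a GIA node keeps several high-similarity edges into the target neighborhoods. Nor are contributions additive: each edge changes degrees inside $\hat A$, and cross-entropy is nonlinear in the logits. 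So ``additivity'' cannot justify the step in either direction. Your proposed fix, assuming that single-edge influence through the best anchor dominates, restates the conclusion rather than proving it.

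The containment of $\Pi$ in SEGIA's feasible set is also false for the actual method. SEGIA's features $X^0=\mathrm{ReLU}(\widetilde M^1X^1W^1)$ are entrywise nonnegative, and its anchors lie in $V_s^1\cup V_t$. GIA configurations with negative entries or other anchors are therefore excluded. Moreover, projected gradient descent on $\{W^k\}$ need not reach the global minimizer you invoke.

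There are two further gaps. The regularizer $-\alpha\sum\mathrm{sim}$ is undefined for multi-edge GIA nodes. Your Lipschitz argument for edge survival ignores that the mask $P$ depends on $x_u$, so the surrogate is discontinuous in $x_u$ at the threshold.

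Since the statement leaves the ``conventional GIA'' unspecified, no argument can be unconditional. What is missing is an explicit class of baselines (matched budget, anchor placement, feature box or norms) under which you prove \emph{both} sides of each comparison, rather than only bounding SEGIA from above.
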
 

{\bf Remark.} 
Theorem~\ref{thm1} indicates that SEGIA perturbs neighborhood similarity patterns less than conventional GIAs and thus better survives homophily-based pruning. In industrial monitoring, this corresponds to a counterfeit entity whose telemetry remains locally plausible while still causing erroneous GNN outputs. 

\subsection{Comparison with existing methods (engineering perspective)}

Many injection attacks (e.g., TDGIA, G-NIA) do not explicitly account for stealth under homophily-oriented sanitization, and camouflage-oriented methods (e.g., HAO, CANA) often rely on multiple injected edges, increasing structural footprint. SEGIA targets the low-footprint regime by enforcing a \textbf{single-edge} attachment per injected entity, reducing edge-budget cost and limiting degree anomalies while maintaining strong attack impact under defenses (Fig.~\ref{comparison} and Table~\ref{cana}). 
\begin{figure*}[!tb]
	\footnotesize
	\centering
	\subfigure[Original Graph]{\includegraphics[width=0.2\textwidth]{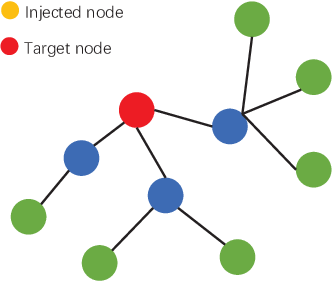}}
	\mbox{}	
	\subfigure[General Injection]{\includegraphics[width=0.2\textwidth]{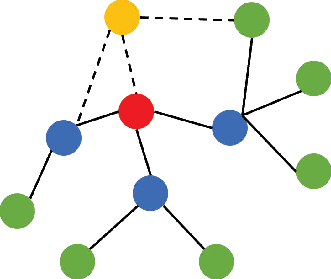}}
	\mbox{}	
	\subfigure[Our Injection]{\includegraphics[width=0.2\textwidth]{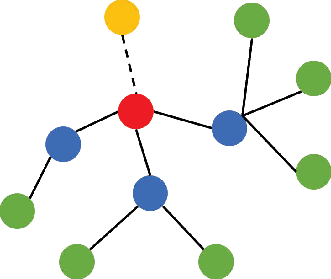}}
	
	\caption{Comparison of our method with other approaches. (a) Original  graph; (b) General injection (multiple edge budgets); (c) Our method (only one edge budget).}
	\label{comparison}
\end{figure*}

\begin{table*}[!htb]
	\small
	\caption{Comparison of the total edge budget.} \label{cana}
	\centering
	\begin{tabular}{@{}lllll@{}}
		\hline
		Datasets &  Node  budget & Feature  range& Edge  budget (ours) & Edge budget (CANA)\\
		\hline
		ogbn-products & 2,099 & [-20, 20] & 2,099 & 6,297 \\
		reddit & 2,001 & [-20, 20] & 2,001 & 14,007 \\
		ogbn-arxiv & 33,869 & [-1, 1] & 33,869 & 474,166 \\
		\hline
	\end{tabular}
\end{table*}

\begin{table*}[!htb]
	\footnotesize
	\caption{Summary of graph adversarial attacks.}
	\label{comparatack}
	\centering
	\begin{tabular}{@{}llllll@{}}
		\hline
		Methods & Attack knowledge& Attack type& Attack setting & Perturbation type  & Victim model\\
		\hline
		PGD & White-box  &Untargeted  &Evasion and Poisoning   &Add/Delete edge   & GNN\\
		NIPA & Gray-box  &Targeted   &Poisoning  & Inject nodes   & GNN\\
		TDGIA  &Black-box  &Targeted  &Evasion  & Inject nodes  & GNN\\
		G-NIA  &Black-box  &Targeted  &Evasion  & Inject nodes  & GNN\\
		\hline
		{\bf SEGIA (ours)} & Black-box  &Targeted  &Evasion  & Inject nodes  & GNN\\
		\hline
	\end{tabular}
\end{table*}

\subsection{Computational complexity} 

Algorithm~\ref{alg:SEGIA} comprises neighborhood sampling and training the feature generator. Let $|V_t|$ be the number of targets, $K$ the sampling depth, $m$ the number of sampled neighbors per layer, and $T$ the number of iterations. The sampling complexity is $O(|V_t|\cdot m^K)$. The feature-generator training cost depends on sampled sizes and feature dimension; in particular, the dominant matrix multiplications in~\eqref{eq:feat-agg_industrial} scale with $\sum_{k=1}^{K} O\!\left(T\cdot \mathrm{nnz}(\widetilde M^k)\cdot D\right)$ (and $D\times D$ transforms), which remains practical when sampling bounds $\mathrm{nnz}(\widetilde M^k)$ by construction.

We next evaluate SEGIA against representative injection baselines under both undefended and defended pipelines, with budgets chosen to isolate performance in the single-edge regime.

\section{Experiments} 
\label{expe} 
We evaluate SEGIA against representative node injection baselines on GNN classifiers \emph{with} and \emph{without} defense mechanisms. For defended settings, we compare against PGD~\cite{PGD}, TDGIA~\cite{TDGIA}, and G-NIA~\cite{G-NIA}, as well as defense-augmented variants that incorporate HAO~\cite{HAO} and CANA~\cite{CANA} (denoted +HAO and +CANA). For undefended settings, we follow~\cite{QUAGIA} and compare against TDGIA, ATDGIA~\cite{ATDGIA}, AGIA~\cite{ATDGIA}, G2A2C~\cite{G2A2C}, and QUAGIA~\cite{QUAGIA}. \textbf{Budgets and constraints.} We match the perturbation rate $Pr$ (injected-node ratio) across methods. SEGIA additionally enforces the deployment-oriented \textbf{single-edge constraint} by construction, i.e., each injected node attaches through exactly one edge, so the number of injected edges equals the number of injected nodes. Baselines may use multi-edge injections depending on their original formulations; when discussing results, we therefore distinguish \emph{node budget} (matched by $Pr$) from \emph{edge footprint} (method-dependent). 

\subsection{Settings} 
\label{subsec41} 
\subsubsection{Datasets} 
\label{subsubsec411} 
We conduct experiments on eight datasets (Table \ref{tab:dataset}). For defended settings, we report results on \textsc{Reddit}~\cite{G-NIA}, \textsc{ogbn-products}, and \textsc{ogbn-arxiv}~\cite{hu2020open}. To characterize limitations under dense connectivity, we additionally evaluate on \textsc{Amazon Computers} and \textsc{Amazon Photo}. All experiments are performed on the largest connected component (LCC), following~\cite{Nettack,zugner2018adversarial}. For undefended settings, we follow~\cite{QUAGIA} and evaluate on \textsc{Cora}, \textsc{PubMed}, and \textsc{grb-cora}.

\subsection{Experimental Analysis}
\label{subsec45} 

\subsubsection{Attack Performance and System-Level Risk} 
\label{subsubsec451} 
Tables~\ref{tab:experiment} and~\ref{tab:undefended-experiment} quantify SEGIA under defended and undefended pipelines, respectively. In industrial graph-based monitoring, target-node misclassification can manifest as missed alarms, incorrect asset state labels, or unsafe prioritization. Conventional injection baselines (e.g., PGD/TDGIA/G-NIA) show limited or inconsistent impact across defenses, whereas SEGIA yields the strongest disruption across datasets and defense models. These results indicate a deployment-relevant risk: \emph{low-footprint counterfeit entities (single-edge per injected node) can remain influential after sanitization and propagate errors through message passing.} Under undefended multi-model evaluation (GCN, APPNP~\cite{APPNP}, and GAT~\cite{GAT}), SEGIA achieves strong disruption at low perturbation rates ($Pr=1\%$ and $Pr=3\%$), improving over QUAGIA by $8.83\%$ and $4.46\%$ on average across datasets. At $Pr=5\%$, QUAGIA can be competitive on some datasets, which is consistent with an operational trade-off: multi-edge injection schemes can increase influence by creating additional propagation paths but also expand structural footprint. In contrast, SEGIA targets the constrained-admission regime by enforcing a single-edge attachment per injected entity.
\begin{table*}[!htbp]
	\caption{LCC denotes the largest connected component of the graph.  $N_{LCC}$  and  $E_{LCC}$ represent the number of nodes and edges in the LCC, respectively. Avg. is abbreviation for Average.} \label{tab:dataset}
	\centering
	\begin{tabular}{@{}ccccccc@{}}
		\hline
		Datasets &  $N_{LCC}$ & $E_{LCC}$ & Classes & Avg.  Degree& Feature  Dimension& Feature  Range\\
		\hline
		ogbn-products &  10,494 & 38,872 & 35 & 3.70 & 100 & [-74.70, 152.71]\\
		reddit &  10,004 & 73,512 & 41 & 7.35 & 602 & [-22.89, 80.85]\\
		ogbn-arxiv &  169,343 & 2,484,941 & 39 & 14.67 & 128 & [-1.39, 1.64]\\
		Computers &  13,752 & 491,722 & 10 & 35.76 & 767 & [0, 1]\\
		Photo &  7,650 & 238,162 & 8 & 31.13 & 745 & [0, 1]\\
		\hline
	\end{tabular}
\end{table*}

\begin{table}[!htbp]
	\caption{Misclassification rate (\%) of the proposed SEGIA across  defense methods FLAG and GNNGuard on ogbn-products, reddit and ogbn-arxiv datasets.} \label{tab:experiment}
	\centering
	% \begin{tabular}{@{}l|lll@{}}
	\begin{tabular}{lccc}
		\hline
		Datasets    & Attack methods & FLAG  & GNNGuard \\
		\hline
		\multirow{11}{*}{ogbn-products}     & Clean & 31.11 & 20.10 \\
		& PGD   & 29.87 & 33.21 \\
		& \quad +HAO  & 27.06 & 33.49 \\
		& \quad +CANA & 31.92 & 35.11 \\
		
		& TDGIA & 33.35 & 39.21 \\
		& \quad +HAO  & 25.58 & 33.68 \\
		& \quad +CANA & 27.49 & 30.63 \\
		
		& G-NIA & 32.92 & 32.25 \\
		& \quad +HAO  & 21.06 & 21.92 \\
		& \quad +CANA & 40.11 & 34.21 \\
		
		& {\bf SEGIA (ours)}  & \textbf{79.13} & \textbf{75.85} \\
		\hline
		\multirow{11}{*}{reddit}            & Clean & 12.79 & 18.39 \\
		& PGD   & 17.99 & 24.79 \\
		& \quad +HAO  & 15.54 & 52.22 \\
		& \quad +CANA & 23.44 & 34.78 \\
		& TDGIA & 20.09 & 26.34 \\
		& \quad +HAO  & 15.79 & 50.47 \\
		& \quad +CANA & 16.89 & 25.84 \\
		
		& G-NIA & 22.99 & 17.99 \\
		& \quad +HAO  & 11.19 & 14.79 \\
		& \quad +CANA & 36.38 & 24.04 \\
		
		&{\bf SEGIA (ours)}& \textbf{79.06} & \textbf{75.66} \\
		\hline
		\multirow{11}{*}{ogbn-arxiv}        & Clean & 35.63& 29.09 \\
		& PGD   & 54.46 & 31.64 \\
		& \quad +HAO  & 52.54 & 40.68 \\
		& \quad +CANA & 54.16 & 50.57 \\
		& TDGIA & 51.82 & 32.37 \\
		& \quad +HAO  & 51.99 & 36.04 \\
		& \quad +CANA & 51.25 & 30.20 \\
		& G-NIA & 52.94 & 28.43 \\
		& \quad +HAO  & 38.25 & 31.45 \\
		& \quad +CANA & 44.16 & 32.51 \\
		& {\bf SEGIA (ours)} & \textbf{84.40} & \textbf{73.45} \\
		\hline
	\end{tabular}
\end{table}
\begin{table*}[!htbp]
	\caption{Comparison of average classification accuracy, where lower values indicate better attack performance and $Pr$ is the perturbation rate.} \label{tab:undefended-experiment}
	\centering
	\begin{tabular}{lccccccc}
		\hline
		Datasets & $Pr$ & TDGIA & ATDGIA & AGIA & G2A2C & QUAGIA & SEGIA \\
		\hline
		\multirow{3}{*}{Cora}     
		& 0.01 & 0.988 & 0.973 & 0.974 & 0.988 & 0.956 & \textbf{0.877} \\
		& 0.03 & 0.946 & 0.924 & 0.947 & 0.965 & 0.881 & \textbf{0.834} \\
		& 0.05 & 0.899 & 0.898 & 0.920 & 0.953 & \textbf{0.815} & 0.818 \\
		
		\hline
		\multirow{3}{*}{PubMed}            
		& 0.01 & 0.990 & 0.984 & 0.982 & 0.986 & 0.955 & \textbf{0.853} \\
		& 0.03 & 0.970 & 0.955 & 0.951 & 0.979 & 0.879 & \textbf{0.795} \\
		& 0.05 & 0.950 & 0.927 & 0.924 & 0.974 & 0.820 & \textbf{0.728} \\
		
		\hline
		\multirow{3}{*}{grb-cora}       
		& 0.01 & 0.975 & 0.967 & 0.969 & 0.991 & 0.955 & \textbf{0.871} \\
		& 0.03 & 0.910 & 0.904 & 0.908 & 0.936 & 0.848 & \textbf{0.845} \\
		& 0.05 & 0.874 & 0.849 & 0.860 & 0.897 & \textbf{0.765} & 0.807 \\
		\hline
	\end{tabular}
\end{table*}

\begin{figure}[!htb]
	\centering
	\includegraphics[width=1\linewidth]{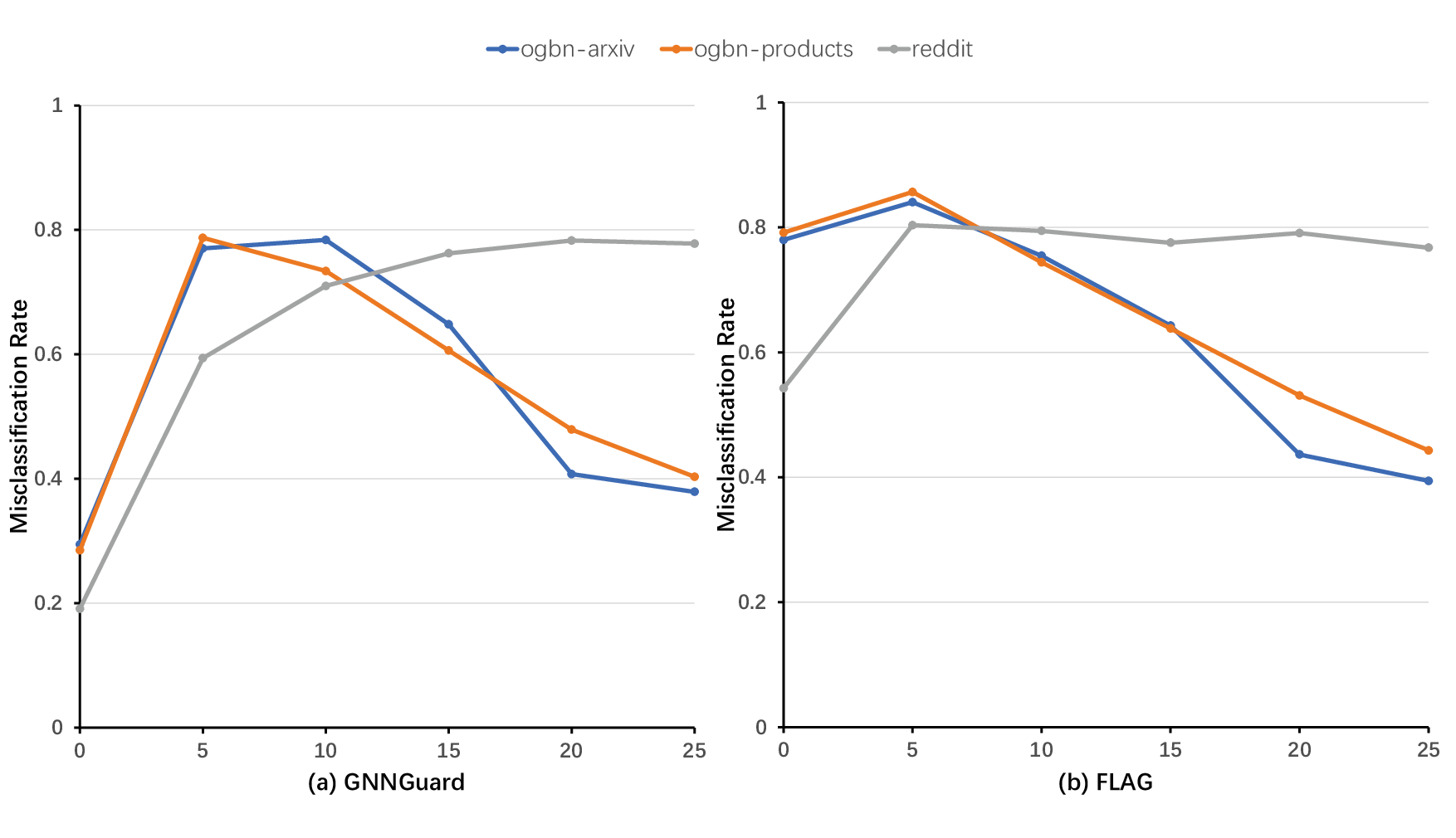}
	\caption{The influence of the  hyperparameter $\alpha$ on the  misclassification rate.}
	\label{fig:alpha}
\end{figure}
\subsubsection{Sensitivity Analysis}
\label{subsubsec:sensitivity} 

Fig.~\ref{fig:alpha} shows that moderate $\alpha$ yields the best disruption, while larger $\alpha$ emphasizes similarity regularization and reduces attack aggressiveness. Fig.~\ref{fig:k-hop} indicates that $K=2$ provides most gains under GNNGuard, with diminishing returns thereafter. These trends align with localized dependency structure in industrial graphs, where short-range neighborhoods dominate both message passing influence and practical consistency monitoring. 

\begin{figure}[!htb]
	\centering
	\includegraphics[width=1\linewidth]{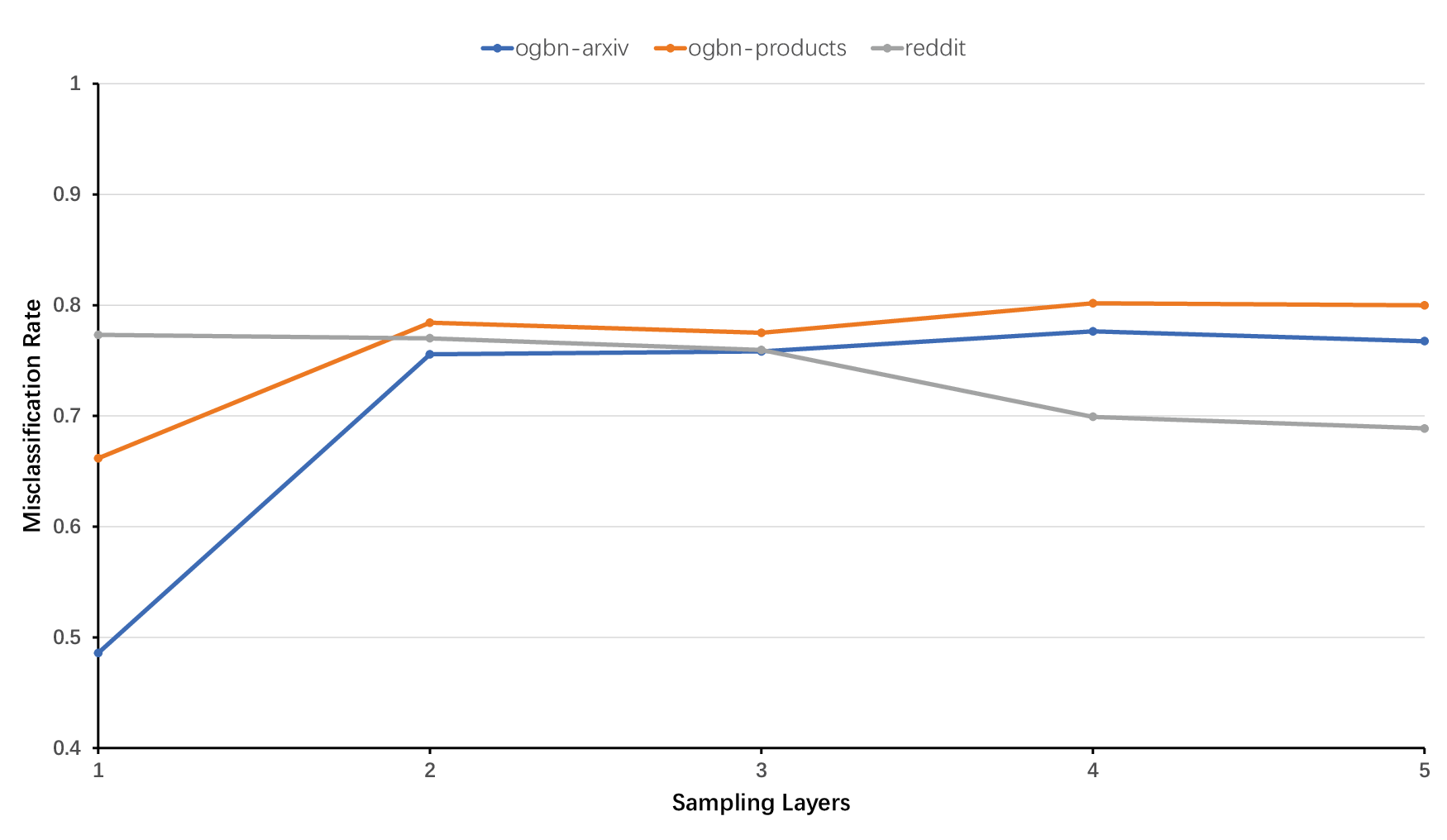}
	\caption{The effect of parameter $K$ on the misclassification rate for all the three datasets by using defense model GNNGuard.}
	\label{fig:k-hop}
\end{figure}
\subsubsection{Ablation Studies: Security Engineering Justification}
\label{subsubsec:ablation} 
\begin{figure}[!htbp]
	\centering
	\includegraphics[width=1\linewidth]{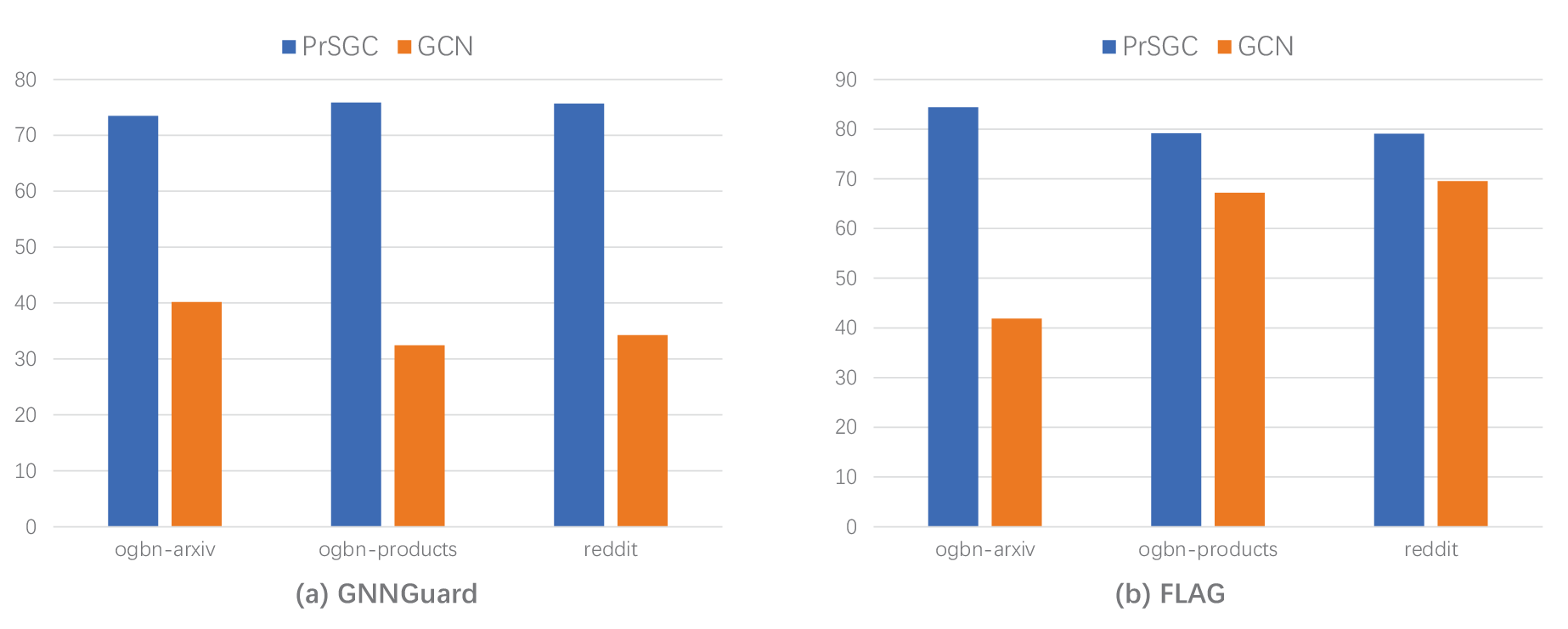}
	\caption{Comparison of misclassification rate (\%) using surrogate models PrSGC and GCN.}
	\label{fig:ablation_expt_surrogate}
\end{figure}

\begin{table}[!htb]
	\caption{Comparison of misclassification rate (MR) and runtime with and without sampling on ogbn-arxiv dataset.} \label{tab:ablation_expt_sampling}
	\centering  
	% \begin{tabular}{@{}l|lll@{}}
	\begin{tabular}{c|cc|cc}
		\hline
		\multirow{2}{*}{Defense model}    & \multicolumn{2}{c|}{Sampling} & \multicolumn{2}{c}{Without sampling} \\
		& MR(\%) & runtime(s) & MR(\%) & runtime (s) \\
		\hline
		GNNGuard & 73.45 & 675.727 & 76.26 & 2390.177 \\
		\hline
		FLAG & 84.40 & 585.094 & 85.76 & 2394.147 \\
		\hline
	\end{tabular}
\end{table}
\begin{table}[!htbp]
	\caption{Misclassification rate (\%) of the proposed SEGIA on Computers and Photo datasets.} \label{tab:shortcoming experiment}
	\centering 
	\begin{tabular}{lrr}
		\hline
		Datasets &  FLAG & GNNGuard\\
		\hline
		Computers &  29.70 & 8.36\\
		Photo &  17.71 & 6.34\\
		\hline
	\end{tabular}
\end{table}

\textbf{PrSGC surrogate.} 
Using PrSGC improves attack success under GNNGuard and FLAG (Fig.~\ref{fig:ablation_expt_surrogate}), indicating that explicitly modeling edge pruning is important for persistence: injected features and attachments optimized for the post-sanitization graph are more likely to survive pruning and remain disruptive. 

\textbf{Neighborhood sampling.} Sampling reduces runtime by $\sim3.5\times$--$4\times$ with minor performance loss (Table~\ref{tab:ablation_expt_sampling}), supporting feasibility under partial observability and bounded compute. From a defense standpoint, scale alone does not provide protection; localized, efficient attacks can still target critical nodes. 

\subsubsection{Limitations}
\label{subsubsec:limitations} 
SEGIA is less effective on high-degree graphs (Table~\ref{tab:shortcoming experiment}) because a single injected edge contributes a smaller fraction of the target neighborhood, attenuating influence. This limitation is most relevant to densely connected IT-side graphs; however, many OT/IIoT graphs are sparse by design due to segmentation, where single-edge admissions remain plausible and impactful.

\section{Conclusion}
\label{conl}
This paper studies black-box graph injection attacks as a deployment-oriented threat to IIoT/CPS graph-based monitoring. We propose SEGIA, a single-edge node injection framework that combines multi-hop neighborhood sampling, reverse message-passing feature synthesis, and a pruning-aware surrogate with similarity regularization. Experiments across datasets and defenses show that SEGIA achieves at least $25\%$ higher attack success than strong baselines under substantially smaller edge footprints, indicating that minimal, plausible node admissions can compromise GNN-driven monitoring reliability and enable risk propagation through message passing. These findings emphasize a trustworthiness perspective for industrial AI: graph construction and node admission are security boundaries, and homophily-oriented pruning alone may not prevent feature-consistent injections.

Deployment-grade protection requires (i) provenance-aware admission validation for devices/identities/assets, (ii) neighborhood-consistency monitoring that incorporates multi-modal evidence (telemetry, inventory, and communication constraints), and (iii) robustness mechanisms that preserve accuracy without excessive over-smoothing when increasing propagation depth. Extending injection threats and defenses to heterogeneous and dynamic industrial graphs remains an important direction.

\bibliographystyle{IEEEtran}
\bibliography{references}

\section*{Appendix: Proof of Theorem~\ref{thm1}}

This appendix provides the proof of Theorem~\ref{thm1}. We consider a linear message-passing surrogate that captures deployment-oriented sanitization via edge pruning (e.g., PrSGC). For a two-layer linearized surrogate, the hidden representation is
\begin{equation}
\label{eq:app_hidden}
H^{(2)} = (\hat A \odot P)^2 X W,
\end{equation}
where $\hat A$ is the normalized adjacency, $P$ is the binary pruning mask, $\odot$ denotes Hadamard multiplication, $X$ is the feature matrix, and $W$ is the trainable weight matrix. The Jacobian of $H^{(2)}_v$ with respect to $X_u$ follows directly from~\eqref{eq:app_hidden}:
\begin{equation}
\label{eq:app_jacobian}
\frac{\partial H^{(2)}_v}{\partial X_u} = \big[(\hat A \odot P)^2\big]_{vu}\, W.
\end{equation}

\begin{proof}
	We first establish the claim for binary classification and then outline the extension to the multi-class case.
	
	\noindent\textbf{Binary case.}
	Under the feature encoding in Theorem~\ref{thm1}, each node feature is
	\[
	X_u =
	\begin{cases}
	[1,-1]^\top, & Y_u = 0,\\[2pt]
	[-1,1]^\top, & Y_u = 1,
	\end{cases}
	\]
	where $Y_u\in\{0,1\}$ is the class label. Let ${\cal L}_{\mathrm{GIA}}$ denote the base graph injection attack loss and define the similarity-regularized objective
	\begin{equation}
	\label{eq:app_atk}
	{\cal L}_{\mathrm{atk}} = {\cal L}_{\mathrm{GIA}} - \alpha\,\mathrm{sim}(x_{u_i},x_{v_j}),\qquad \alpha>0,
	\end{equation}
	where $u_i$ is an injected node, $v_j$ is its (single) anchor neighbor in the original graph, and $\mathrm{sim}(\cdot,\cdot)$ denotes cosine similarity. For a target node $v$, let ${\cal L}_v$ denote its contribution to the overall loss. Using~\eqref{eq:app_jacobian}, the gradient of ${\cal L}_v$ with respect to an injected feature vector $X_u$ is
	\begin{equation*}
	\label{eq:app_grad}
	\frac{\partial {\cal L}_v}{\partial X_u} = \frac{\partial {\cal L}_v}{\partial H^{(2)}_v}\, \big[(\hat A \odot P)^2\big]_{vu}\, W.
	\end{equation*}
	For binary cross-entropy with one-hot encoding, $\partial {\cal L}_v / \partial H^{(2)}_v$ is proportional to a signed class-separating direction; without loss of generality, we write this direction as $[-1,\,1]^\top$. The regularization term in~\eqref{eq:app_atk} contributes a component that increases $\mathrm{sim}(x_{u_i},x_{v_j})$, i.e., it drives $x_{u_i}$ toward $x_{v_j}$ and thus reduces the angular deviation between injected and anchor features. Consider a projected gradient descent update on $X_u$ with step size $\delta>0$:
	\begin{equation}
	\label{eq:app_pgd}
\resizebox{.94\linewidth}{!}{$	X_u^{(t+1)} = X_u^{(t)} + \delta\,\mathrm{sgn}\!\left( \big[(\hat A \odot P)^2\big]_{vu}[-1,\,1]^\top - \alpha[1,\,-1]^\top \right)W,
	$}
	\end{equation}
	where $\mathrm{sgn}(\cdot)$ is applied element-wise. Assuming the entries of $W$ are nonnegative (or, equivalently, absorbing signs into the class-separating direction), the update direction in~\eqref{eq:app_pgd} decreases as the similarity term counterbalances the attack gradient. At a stationary point (or when updates stall under projection/constraints), the effective update direction satisfies
	\begin{equation*}
	\label{eq:app_balance}
	\big[(\hat A \odot P)^2\big]_{vu}[-1,\,1]^\top - \alpha[1,\,-1]^\top = \mathbf 0,
	\end{equation*}
	which corresponds to an operating point where the injected feature is aligned with the anchor neighborhood while still exerting maximal influence through the defended message-passing operator $(\hat A \odot P)^2$. This balance enforces a \emph{stealth constraint}: the injected node remains locally homophilous and is therefore less likely to be removed by similarity-based sanitization, consistent with the ``homophily-unnoticeable'' configuration analyzed in~\cite{HAO}. Following the argument in~\cite{HAO}, classical multi-edge GIAs that do not explicitly regularize similarity tend to reduce node-centric homophily more than similarity-constrained injection. Consequently, for any node $v$,
	\begin{equation}
	\label{eq:app_homo_node}
	h^{\mathrm{GIA}}_v \le h^{\mathrm{SEGIA}}_v.
	\end{equation}
	Aggregating~\eqref{eq:app_homo_node} over $v\in V$ implies the distributional inequality in~\eqref{eq:homophilyineq}:
	\[
	\operatorname{dis}\!\big(H_G, H_{G^{\mathrm{SEGIA}}}\big) \le \operatorname{dis}\!\big(H_G, H_{G^{\mathrm{GIA}}}\big).
	\]
	Finally, let $g_\psi$ denote a homophily-based defender that prunes edges $(u,v)$ when $\mathrm{sim}(x_u,x_v)<\varepsilon$. Since SEGIA explicitly increases similarity between each injected node and its anchor, its injected edges are more likely to satisfy the pruning criterion and remain in the defended graph. Therefore, after applying $g_\psi$, SEGIA preserves more effective attack influence than a baseline GIA under the same budget, yielding the defended-loss inequality in~\eqref{eq:lossineq}:
	\[
	{\cal L}_{\mathrm{atk}}\!\big(g_\psi(G^{\mathrm{SEGIA}})\big) \le {\cal L}_{\mathrm{atk}}\!\big(g_\psi(G^{\mathrm{GIA}})\big).
	\]
	This completes the proof for the binary case.
	
	\noindent\textbf{Multi-class extension (outline).}
	Under the simplex-style encoding in Theorem~\ref{thm1}, projecting the multi-class cross-entropy gradient onto any two-class subspace recovers the binary update structure. The similarity term enforces local alignment in each projection, so the above homophily and defended-loss inequalities extend to $C>2$.
\end{proof}
\end{document}